\documentclass[11pt]{article}
\usepackage[margin=1in]{geometry}
\usepackage{mathtools}
\usepackage{booktabs}
\usepackage{float}
\usepackage{hyperref}
\usepackage{mmll2}   
\hypersetup{colorlinks=true, allcolors=MidnightBlue}
\newcommand{\DSPACE}{\mathsf{DSPACE}}

\newcommand{\TC}{\mathsf{TC}}

\newcommand{\Pclass}{\mathsf{P}}
\newcommand{\CLL}{\mathrm{CLL}}
\newcommand{\SSL}{\mathrm{SSL}}
\newcommand{\COT}{\mathrm{CoT}}
\newcommand{\poly}{\mathrm{poly}}
\newcommand{\polylog}{\mathrm{polylog}}

\title{Chain-of-Thought and Compressed Looped Transformers:\\
A Memory-Budget Separation}
\author{Haozhou Zhang \\
Department of Mathematics and Statistics \\
University of Idaho \\
\texttt{zhan3063@vandals.uidaho.edu}}
\date{\today}

\begin{document}
\maketitle

\begin{abstract}
Chain-of-thought prompting and looped Transformers both give a fixed model more
test-time computation, but they differ in what they remember. Chain-of-thought
stores intermediate state in generated tokens that remain in the context,
whereas a looped Transformer carries state through recurrent hidden activations.
We argue that this persistent mutable memory is a central resource for test-time
reasoning.

We compare three memory regimes, the compressed latent loop, the full
sequence-state loop, and the chain-of-thought scratchpad. Our main result shows that a compressed loop is
limited by the size of its recurrent state. Running the loop longer adds
computation but does not by itself create a growing scratchpad, so a loop with a
small recurrent state remains a small-space reasoner even when run for many
steps. Under a standard complexity assumption, such loops cannot decide problems
that are $\Pclass$-complete under logspace reductions, whereas polynomial-length
chain-of-thought can.

The separation is specific to compressed loops, as full sequence-state loops
carry state at every input position and live in a memory-rich regime closer to
explicit scratchpads. Controlled pointer-chasing and associative-recall sweeps
illustrate this memory-budget view, with performance sensitive to whether the
persistent-state budget matches the task's working-memory demand.
\end{abstract}

\section{Introduction}

Chain-of-thought (CoT) prompting and recurrent-depth Transformers are two ways of
giving a fixed model more computation at test time. In CoT,
the model writes intermediate state into generated tokens and later reads those
tokens back from the context. In a looped Transformer, the model instead reuses
a block on hidden activations, carrying intermediate state through the recurrent
hidden state. These two mechanisms both add iteration, but they store
intermediate information in different places.

This distinction matters for serial reasoning. \citet{li2024chain} show that CoT
can supply sequential computation, since with enough generated steps
constant-depth Transformers simulate increasingly large Boolean circuits, and
\citet{merrill2024the} characterise the polynomial-length scratchpad regime as
$\Pclass$. Looped and latent-thought models have also been shown to be powerful,
but the strongest formal comparisons usually study \emph{memory-rich} loops: the
recurrent state contains a full hidden sequence, and therefore grows with the
input length or with appended scratch positions. In that regime, looped
Transformers can act as programmable computers \citep{giannou2023programmable},
simulate CoT by appending tokens \citep{saunshi2025reasoning}, and in some
parallel-evaluable settings outperform explicit token-by-token reasoning
\citep{cotvsloop2025}.

The question we study is what remains true when the recurrent state is
\emph{not} a full sequence. A loop may be run for many iterations, but if it
carries only a small persistent state from one iteration to the next, then
recurrence adds time without adding much mutable memory. We therefore separate
the amount of computation performed from the amount of state preserved across
steps.

Throughout, $n$ denotes the input length, $d$ the hidden width, $p$ the number
of bits used to store each activation, $s$ the number of compressed recurrent
slots, $T$ the number of loop iterations, and $\ell$ the number of generated
CoT tokens. We write $M$ for the number of persistent mutable bits
carried from one reasoning step to the next. With this accounting, a compressed
latent loop has $M = s\,d\,p$; a sequence-state loop has $M = n\,d\,p$, since it
stores a hidden vector at every input position; and a CoT
scratchpad has $M = \Theta(\ell \log|\Sigma|)$ ($\Theta(\ell)$ for a fixed
vocabulary), since generated tokens are appended to the context and re-read
later.

Our formal result isolates the compressed-loop regime. Write $\DSPACE(f)$ for
the class of languages a deterministic Turing machine decides using $O(f)$ work
space, $\polylog(n) = (\log n)^{O(1)}$, and
$\DSPACE(\polylog n) = \bigcup_{c \ge 1} \DSPACE\!\big((\log n)^{c}\big)$. A
compressed loop with $s$ recurrent slots, width $d$, precision $p$, and
$T$ iterations is then simulated in
\[
\DSPACE\!\big(O(s\,d\,p + \log n + \log T)\big).
\]
The simulation stores the recurrent state, streams over the read-only input
whenever attention needs to inspect it, and reuses the same scratch space at
each iteration. Thus, if the persistent state is polylogarithmic in the input length and the
loop runs for at most quasi-polynomially many steps (so the iteration counter
needs only polylogarithmic space), the loop remains a polylogarithmic-space
machine.

This gives a conditional separation from CoT in the memory-constrained regime.
Assuming the standard but unproven separation
$\Pclass \not\subseteq \DSPACE(\polylog n)$, such compressed loops
cannot decide $\Pclass$-complete serial tasks, whereas
polynomial-length CoT can under standard expressivity results. This separation
is specific to the compressed case. Full sequence-state loops carry $n\,d\,p$
persistent bits and lie in a different, memory-rich regime, so what the bound
isolates is the limited persistent memory rather than recurrence in general.

\medskip\noindent Concretely, the paper makes the following five contributions:
\begin{itemize}
\item We formalise persistent mutable memory as a resource for comparing CoT,
full sequence-state loops, and compressed latent loops.
\item We prove that compressed latent loops are small-space machines: a loop
with only a compressed recurrent state can be simulated by storing that state,
streaming over the read-only input, and reusing scratch space across iterations.
\item We show why this argument does not extend to full sequence-state loops in
the same way. Once the loop carries hidden state at every input position, the
corresponding memory budget is linear in the sequence length, placing these
models in the memory-rich regime studied by prior looped-Transformer work.
\item We derive a conditional separation between polylogarithmic-memory
compressed loops and polynomial-length CoT on serial polynomial-time tasks,
assuming polynomial time is not contained in polylogarithmic space.
\item We connect prior memory-token results on Sudoku to the persistent-memory
lens as external motivation, and provide two controlled finite-size probes, a
pointer-chasing slot sweep and an associative-recall state-size sweep.
\end{itemize}

\noindent The rest of the paper is structured as follows. \Cref{sec:related}
reviews related work, and \Cref{sec:models} defines the three reasoners and
their memory budgets. \Cref{sec:formal} develops the formal separation in three
steps: the small-space bound for compressed loops, the sequence-state endpoint,
and the separation from CoT, with detailed proofs deferred to
\Cref{app:lemma,app:upper,app:escape,app:crossover}. \Cref{sec:empirics}
presents the empirical studies, and \Cref{sec:discussion} discusses the scope of
the result.

\section{Related work}
\label{sec:related}

\paragraph{Expressivity of CoT.}
\citet{li2024chain} show CoT supplies sequential computation. With $\ell$
generated steps, constant-depth Transformers simulate Boolean circuits of size
$\ell$. \citet{merrill2024the} give a length-parameterised characterisation of
decoders with scratchpads, identifying the polynomial-step regime with $\Pclass$
under their model assumptions, and \citet{bavandpour2025lower} prove lower bounds
on the number of CoT steps required in the hard-attention setting. These works
identify the CoT scratchpad as a growing computational resource; in our notation
this is the endpoint $M_{\COT} = \Theta(\ell)$ for a fixed token alphabet.
\citet{yang2025pencil} instead modify the scratchpad interface, adding a
reduction rule that erases intermediate thoughts to achieve space-efficient
universal computation; our CoT endpoint is the standard append-only scratchpad,
so erasure or reduction occupies a different point on the same memory axis.

\paragraph{Looped and recurrent-depth Transformers.}
\citet{giannou2023programmable} construct looped Transformers as programmable
computers whose input acts as both instruction stream and memory.
\citet{saunshi2025reasoning} show looped Transformers solve
group-composition-style iterative tasks with logarithmically many loops and can
simulate CoT using appended scratch positions. \citet{luca2024simulation}
construct looped Transformers that simulate classical graph algorithms (BFS, DFS,
Dijkstra, Kosaraju) and, with extra attention heads, are Turing-complete, while
\citet{yang2024looped} show looped Transformers can emulate iterative learning
algorithms with substantially fewer parameters than standard Transformers on
data-fitting tasks. These constructions are memory-rich in the sense relevant
here: the loop maintains a full sequence state, padding or scratch positions, or
graph-interaction state, rather than only a fixed compressed latent vector, which
our \Cref{prop:escape} recovers as the memory-rich endpoint.
\citet{merrill2026a,merrill2026exact} further study depth, padding, and looping
as parallelisable forms of test-time compute, with log-depth already adding
expressive power for regular languages and graph connectivity, while padding
combined with looping yields exact characterisations in threshold-circuit classes.

\paragraph{Memory tokens and the depth-state trade-off.}
\citet{sapunov2026utm} studies a single-block Universal Transformer with adaptive
computation time and a variable number $r$ of learned memory tokens on
Sudoku-Extreme. The reported threshold between zero and eight memory tokens, the
plateau, the dilution boundary, and the substitution between memory size and halt
depth are suggestive of a persistent-memory effect, and our space-complexity lens
predicts such thresholds when the effective working-state demand exceeds the
persistent state available to the model. The architecture also carries
sequence-state activations, however, so it is not a clean instance of our
compressed-loop model, and we treat it as external motivation rather than a
diagnostic test of the separation. \citet{lu2025latent} analyse the
depth-recurrent model of \citet{geiping2026scaling}, finding only marginal gains
from added recurrence depth and limited evidence of interpretable latent CoT.
Relatedly, \citet{blayney2026mechanistic} mechanistically analyse
looped-reasoning models, reporting cyclic fixed points and attention
stabilisation across recurrences. These works support the view that recurrence
can reorganise computation without necessarily increasing the amount of
persistent state.

\paragraph{Recall-memory trade-offs in efficient attention.}
A separate line of work measures the state needed for associative recall in
efficient sequence models. \citet{arora2024zoology} introduce multi-query
associative recall as a diagnostic for efficient language models and attribute
much of the gap between attention and gated-convolution models to recall
ability, while \citet{arora2024simple} identify a recall-throughput trade-off in
which fixed-state recurrent alternatives recover recall as the recurrent state
or feature dimension grows, at a corresponding memory cost.
\citet{okpekpe2025revisiting} further show that the learning rate strongly
affects recurrent-model recall, so reported recall gaps partly reflect
optimisation. Our associative-recall sweep in \Cref{sec:gla} uses this setting
as a controlled probe of the same persistent-memory axis; the formal separation
concerns compressed looped Transformers and append-only CoT.

\paragraph{CoT versus latent or looped thought.}
\citet{cotvsloop2025} formally compare CoT with latent-thought reasoning,
including looped and continuous hidden-state computation, the latter exemplified
by the continuous-thought model of \citet{hao2025training}. Their results
separate the two interfaces along two axes, with latent thought holding an
advantage for parallel computation and CoT a separate advantage from stochastic
decoding and approximate counting. We study a different axis. Holding attention
on deterministic serial computation, we ask what happens when the looped reasoner
does not carry a full hidden sequence but only a bounded persistent state; in
that memory-constrained regime, the limiting resource is persistent memory rather
than parallel depth.

\paragraph{Circuit complexity of Transformers.}
\citet{merrill2023logspace} place log-precision Transformers in logspace-uniform
$\TC^0$, providing the circuit-complexity background for the comparisons above.
We use the same log-precision, logspace-uniform setting to upper-bound compressed
latent loops by their persistent mutable memory budget.

\section{Models and memory accounting}
\label{sec:models}

We now define the three idealised reasoners used in the separation. As above,
$n$ is the input length, $d$ the hidden width, and $p$ the number of bits used
to store each activation; let $\mathcal{F}_p$ denote the $p$-bit fixed-point
activation set, defined precisely in \Cref{app:model}. The memory budget $M$
counts mutable activation state that persists across reasoning steps; it does
not count the fixed parameters, the read-only input $x \in \Sigma^n$, or
temporary workspace recomputed within a step. A \emph{block} $B_\theta$ is a
constant number of multi-head attention and multi-layer-perceptron (MLP)
sublayers at precision $p$; we assume the log-precision regime $p = O(\log n)$
and a logspace-uniform parameter family, following \citet{merrill2023logspace}.

\begin{definition}[Compressed latent loop {$\CLL[s,d,p,T]$}]
\label{def:cll}
The mutable state is $z_t \in (\mathcal{F}_p^d)^s$, holding $s$ persistent slots,
initialised by $z_0 = \mathrm{init}_\theta(x)$, which is either fixed learned
memory or computable from the read-only input in $O(s\,d\,p + \log n)$ space. The
update
\[
z_{t+1} = B_\theta(z_t \,;\, x), \qquad t = 0, \dots, T-1,
\]
lets the $s$ state-slots act as attention queries against the $n$ read-only
input tokens as keys and values. The input side is read-only at every sublayer,
where the key and value for input position $i$ are logspace-computable functions
of $x_i$, the position index, and the parameters, and the block never materialises
or updates an $n$-position hidden sequence. Only the $s$ latent slots are mutable
across sublayers and iterations; slot-to-slot self-attention among them is
permitted, since those slots are already stored in $O(s\,d\,p)$ space. The output
is decoded from $z_T$, and no per-input-token persistent state is carried between
iterations. Its persistent memory is $M_{\CLL} = s\,d\,p$ bits.
\end{definition}

\begin{definition}[Sequence-state loop {$\SSL[d,p,T]$}]
\label{def:ssl}
The mutable state is $H_t \in (\mathcal{F}_p^d)^n$, the full length-$n$ hidden
sequence, initialised by $H_0 = \mathrm{embed}_\theta(x)$. Each update
$H_{t+1} = B_\theta(H_t)$ is a full self-attention pass that updates all $n$
positions, with the input embeddings and positions available as fixed features.
This is the standard looped Transformer of
\citet{giannou2023programmable, saunshi2025reasoning}; if padding or scratch
positions are appended, they are included in the sequence length $n$. Its
persistent memory is $M_{\SSL} = n\,d\,p$ bits.
\end{definition}

\begin{definition}[CoT scratchpad {$\COT[d,p,\ell]$}]
\label{def:cot}
The model autoregressively emits tokens $y_1, \dots, y_\ell \in \Sigma$, each
\[
y_t = \mathrm{decode}\big(B_\theta([x \,;\, y_{<t}])\big),
\]
appended to the context and re-read at later steps. Its persistent memory is the
scratchpad, $M_{\COT} = \Theta(\ell \log|\Sigma|)$ bits ($\Theta(\ell)$ for a
fixed vocabulary); we count the externally
written scratchpad tokens, not implementation-level caches used to accelerate
decoding.
\end{definition}

For the comparison below, the relevant difference between the three is the
persistent mutable memory carried across reasoning steps, fixed by the slot count
in a compressed loop, growing with the sequence length in a sequence-state loop,
and growing with the scratchpad in chain-of-thought.

\section{Space bounds and the memory-budget separation}
\label{sec:formal}

We prove the separation in three steps: compressed loops admit a small-space
simulation; full sequence-state loops instead carry linear state; and combining
these with standard chain-of-thought expressivity separates the two in the
compressed-memory regime.

\subsection{Compressed loops admit a small-space simulation}
\label{sec:upper}

The memory accounting of \Cref{sec:models} becomes useful once it is turned into
a space simulation. Since a compressed loop carries no mutable state at the input
positions, a simulator only needs to store the $s$ recurrent slots. Attention
over the input can be recomputed by streaming over the read-only input tape, and
the scratch space can be reused across slots, layers, and iterations. Thus
recurrence adds time, but not persistent mutable memory, apart from the counter
used to track the loop steps. The full one-block simulation is in \Cref{app:lemma}
and the iteration argument in \Cref{app:upper}.

\begin{lemma}[One compressed-loop update in small space]
\label{lem:onestep}
Under the log-precision, logspace-uniform regime, one update
$z \mapsto B_\theta(z \,;\, x)$ of a $\CLL[s,d,p,T]$ is computable in
deterministic work-space $O(s\,d\,p + \log n)$, given $z$ stored on the work
tape and $x$ on a read-only input tape.
\end{lemma}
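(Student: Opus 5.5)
We prove the lemma by composing the constant number of sublayers of $B_\theta$. We show that each sublayer maps the current $s$-slot state to the next $s$-slot state in work-space $O(s\,d\,p + \log n)$. Throughout, the output of a sublayer is written into a second buffer of $s\,d\,p$ bits, and the two buffers are then swapped. Since $B_\theta$ has $O(1)$ sublayers, the total space is $O(s\,d\,p + \log n)$ plus a constant-size sublayer counter. Parameters are never stored. By logspace uniformity, any individual weight (given a row index, a column index and the value of $n$) can be computed on demand in $O(\log n)$ space. That scratch space is erased after each lookup, so each weight entry is recomputed whenever it is needed.

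\emph{MLP and normalisation sublayers.} These act slot-wise. For each slot $j \le s$ we read its $d$ coordinates from the work tape and produce each output coordinate by a running sum. The running sum takes one $O(p)$-bit accumulator, with hidden units streamed one at a time and weights regenerated as described above. In the log-precision fixed-point model of \Cref{app:model}, accumulators need $O(p + \log d)$ bits, and we assume $\log d = O(\log n)$ or absorb it into $s\,d\,p$. Residual additions are coordinate-wise. Slot-to-slot self-attention is handled in the same way as cross-attention below, with keys and values taken from the stored slots instead of the input.

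\emph{Cross-attention from slots to the read-only input.} This is the main step. Fix a head $h$ and a slot $j$. The query $q_j \in \mathcal{F}_p^d$ is computed from $z_j$ and stored, using $O(d\,p)$ space per slot; all $s$ queries fit in $O(s\,d\,p)$. For each input position $i$, the key $k_i$ and value $v_i$ are logspace-computable from $x_i$, $i$ and the parameters, by \Cref{def:cll}. We therefore never store them as vectors. Instead, each coordinate is regenerated when needed, and the score $\langle q_j, k_i\rangle$ is computed as a running sum. Softmax is evaluated in two streaming passes over $i = 1,\dots,n$ using an $O(\log n)$-bit position counter. The first pass computes the maximum score $m_j$, or directly the normaliser $Z_j = \sum_i \exp(\langle q_j,k_i\rangle - m_j)$. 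The second pass accumulates $\sum_i \exp(\cdot)\, v_i$ coordinate by coordinate, and the result is then divided by $Z_j$. Each accumulator has $O(p + \log n)$ bits because at most $n$ bounded $p$-bit terms are summed. Rounding follows the fixed-point semantics of \Cref{app:model}, so the simulator reproduces the model's arithmetic exactly rather than approximately. If hard or averaging-hard attention is used instead, the same two passes work, with a max and a tie count.

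The obstacle I expect is bookkeeping, not an idea. It lies in exact agreement of the streamed arithmetic with the model's finite-precision semantics, in particular the order of summation and rounding in softmax and the division. One option is to make the model definition use exact accumulation followed by a single rounding; this is computable in a streaming fashion with $O(p+\log n)$-bit counters. The alternative is to adopt the summation order used in \Cref{app:model} and mimic it exactly. A secondary point is that keys and values at different sublayers must not depend on earlier-updated per-position state. This is exactly what \Cref{def:cll} forbids, so the cross-attention step above applies verbatim at every sublayer, and the total bound $O(s\,d\,p + \log n)$ follows.
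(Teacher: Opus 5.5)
Your proposal is correct and follows essentially the same route as the paper: two $s\,d\,p$-bit buffers swapped after each sublayer for the synchronous slot update, per-slot scratch of $O(dp+\log n)$ reused across slots, and a streaming softmax over the read-only input (and the stored slots) with a fixed summation order, so the finite-precision block is reproduced exactly. The only real difference is the pass structure. The paper uses three passes: the max, then $Z=\sum_u \exp(\ell_u-m)$, then accumulating $\alpha_u v_u$ with $\alpha_u=\exp(\ell_u-m)/Z$ in increasing index order, which is the concrete form of your ``mimic the prescribed order'' option; your $Z_j$ cannot be formed in the first pass because it depends on $m_j$, but an extra pass costs only time, not space.
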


\begin{proof}[Proof intuition]
A compressed loop never materialises an $n$-position hidden sequence. To update
one latent slot, the simulator streams over the read-only input, computing the
attention numerator and denominator with a constant number of passes and a few
$p$-bit registers. Once the slot's readout has been accumulated, the MLP and
residual updates are local computations on $O(dp)$ bits. The same scratch is
reused for every slot, so the only persistent workspace is the $s\,d\,p$-bit
latent state, plus $O(\log n)$ addressing and arithmetic scratch.
\Cref{app:lemma} gives the full finite-precision simulation.
\end{proof}

\begin{theorem}[Compressed loops are space-bounded]
\label{thm:upper}
For any $\CLL[s,d,p,T]$ in the log-precision logspace-uniform regime,
\[
\CLL[s,d,p,T] \subseteq \DSPACE\!\big(O(s\,d\,p + \log n + \log T)\big).
\]
In particular, if $s\,d\,p = \polylog(n)$ and $\log T = \polylog(n)$ (e.g.\ for
polynomially many iterations), then $\CLL \subseteq \DSPACE(\polylog n)$.
\end{theorem}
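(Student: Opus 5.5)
The plan is to build a deterministic Turing machine that simulates the loop directly, with \Cref{lem:onestep} as the only nontrivial ingredient. The work tape holds four regions. The first is the current state $z_t$, taking $s\,d\,p$ bits. The second is a buffer for the next state $z_{t+1}$, also $s\,d\,p$ bits. The third is a binary counter $t \in \{0,\dots,T\}$, taking $O(\log T)$ bits. The fourth is a reusable scratch region of size $O(s\,d\,p + \log n)$, which is the workspace promised by \Cref{lem:onestep}.

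The machine runs in three phases.
\begin{enumerate}
\item \emph{Initialisation.} Compute $z_0 = \mathrm{init}_\theta(x)$ into the first region. By \Cref{def:cll}, this takes $O(s\,d\,p + \log n)$ space whether the memory is fixed learned memory or computed from the input.
\item \emph{Iteration.} While $t < T$:
\begin{itemize}
\item invoke the procedure of \Cref{lem:onestep} on $z_t$, with $x$ on the read-only input tape, writing $B_\theta(z_t;x)$ into the second region;
\item copy the second region back into the first;
\item erase the scratch region and increment the counter.
\end{itemize}
\item \emph{Output.} Decode the answer from $z_T$. The decoder is a fixed readout of $O(dp)$-bit vectors, of the same kind as the MLP sublayers, so it needs $O(s\,d\,p + \log n)$ space.
\end{enumerate}

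Two facts make the space bound close. First, space is reused: each iteration leaves only $z_{t+1}$ and the counter on the tape, so the total is the maximum over phases, not the sum over iterations. That gives $O(s\,d\,p) + O(\log T) + O(s\,d\,p+\log n) = O(s\,d\,p + \log n + \log T)$. Second, for logspace-uniformity I need the parameters $\theta$ to be produced on demand in $O(\log n)$ space each time a sublayer reads them. I would fold this into the per-step scratch, as \Cref{lem:onestep} presumably already does, and point out that the counter is independent of $\theta$. The case $T$ not known in advance also needs a remark. I would assume $T$ is a fixed function of $n$ that can be computed in $O(\log T + \log n)$ space, or else treat it as given.

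For the ``in particular'' clause, substitute $s\,d\,p = (\log n)^{a}$ and $\log T = (\log n)^{b}$. The bound becomes $O((\log n)^{\max(a,b,1)})$, which lies in $\DSPACE(\polylog n)$. For polynomially many iterations, $\log T = O(\log n)$.

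I expect the main obstacle to be the uniformity and bookkeeping rather than the iteration itself. The per-step scratch must be genuinely erasable and must not secretly depend on $t$. Parameter generation and the decoding step must also fit the same budget. With \Cref{lem:onestep} assumed, these are routine checks.
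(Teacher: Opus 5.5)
Your proposal is correct and follows the paper's proof essentially step for step. Both keep two $s\,d\,p$-bit buffers for the current and next state and a $\lceil\log T\rceil$-bit counter, apply \Cref{lem:onestep} repeatedly with its scratch reused across iterations, and decode at the end, giving $O(s\,d\,p+\log n+\log T)$ and hence $\DSPACE(\polylog n)$ in the stated regime; your remarks on generating parameters on demand and on computing $T$ within the budget are bookkeeping the paper leaves implicit, not departures from its argument.
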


The displayed bound assumes $O(p)$-space $p$-bit arithmetic; under general
logspace-uniform arithmetic it carries an additive $\poly(p)$ term, still
polylogarithmic when $p = O(\log n)$ (see \Cref{app:model}).

\begin{proof}[Proof intuition]
The simulator runs the loop literally: it stores the current compressed state,
computes the next state using \Cref{lem:onestep}, overwrites the old state, and
advances a $\log T$-bit counter. The input may be rescanned at every iteration,
but because it is read-only this increases time, not workspace. Iteration thus
adds a counter but no persistent mutable memory. The full proof is in
\Cref{app:upper}.
\end{proof}

\subsection{The sequence-state endpoint}
\label{sec:escape}

\Cref{sec:upper} bounded compressed loops by the space needed to store their
recurrent slots. For sequence-state loops, the same accounting gives a different
scale. The recurrent state is the whole hidden sequence: one $d$-dimensional
$p$-bit vector for each of the $n$ positions, so the persistent state has
$n\,d\,p$ bits. A simulator can store this sequence and run the same
step-by-step argument as before, but the resulting bound is linear in the
sequence length. This is the memory-rich regime used by standard
looped-Transformer expressivity constructions. \Cref{app:escape} gives the proof
of \Cref{prop:escape}.

\begin{proposition}[Sequence-state loops are memory-rich]
\label{prop:escape}
For $\SSL[d,p,T]$, the persistent memory is $M_{\SSL} = n\,d\,p$, and the
simulation argument of \Cref{thm:upper} gives
\[
\SSL[d,p,T] \subseteq \DSPACE(O(n\,d\,p + \log n + \log T)).
\]
For fixed $d,p$ this is $\Theta(n)$ space, linear in the sequence length up to
the width and precision factors $d$ and $p$. This linear-state endpoint is
genuinely memory-rich. The programmable-computer construction of
\citet{giannou2023programmable}, combined with the CoT simulation of
\citet{saunshi2025reasoning} and the CoT-to-circuits result of
\citet{li2024chain}, places $\Pclass$-complete tasks such as the circuit value
problem (CVP) within the sequence-state endpoint. Here $n$ denotes the total
sequence length available to the loop, including any padding or dummy scratch
positions. Hence, under $\Pclass \not\subseteq \DSPACE(\polylog n)$,
$\SSL \not\subseteq \DSPACE(\polylog n)$.
\end{proposition}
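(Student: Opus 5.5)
The plan is to prove the three parts of \Cref{prop:escape} in order: the memory count and space upper bound, the $\Pclass$-hardness placement via prior constructions, and the conditional non-containment.

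\textbf{Upper bound.} I would reuse the simulation from \Cref{thm:upper} essentially verbatim. The only change is that the stored state is now $H_t \in (\mathcal{F}_p^d)^n$, which occupies $n\,d\,p$ bits on the work tape; this is exactly $M_{\SSL}$ by \Cref{def:ssl}. The one-step update $H \mapsto B_\theta(H)$ is handled as in \Cref{lem:onestep}, with queries, keys and values all taken from the stored hidden sequence rather than the read-only input. For each position $j$, the simulator streams over positions $i=1,\dots,n$ of the current $H$ and accumulates the attention numerator and denominator in $O(dp+\log n)$ scratch. It then applies the MLP and residual locally.

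The one new subtlety is that updates must not overwrite $H_t$ while it is still being read. I would resolve this by double-buffering: write $H_{t+1}$ into a second $n\,d\,p$-bit region, then swap the two regions. This costs only a constant factor. A $\log T$-bit counter tracks the iterations, and layer scratch is reused across positions, sublayers and iterations. The total is $O(n\,d\,p+\log n+\log T)$, which for fixed $d,p$ is $\Theta(n)$ once $\log T = O(n)$.

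\textbf{Expressivity side.} I would chain the cited results rather than construct anything new.
\begin{enumerate}
\item \citet{li2024chain}: polynomial-length CoT with constant depth evaluates circuits of polynomial size, so it decides CVP.
\item \citet{saunshi2025reasoning}: a looped Transformer with $\ell$ appended scratch positions simulates $\ell$ CoT steps, giving an $\SSL$ on total length $n+\poly(n)$, which is still polynomial in the input length.
\item \citet{giannou2023programmable}: this serves as independent confirmation that the sequence-state loop can act as a general programmable computer.
\end{enumerate}
The main obstacle I expect is matching model assumptions across these papers: log precision versus the precision those constructions actually need, uniformity, and the iteration count. I would first check that the \citet{saunshi2025reasoning} simulation uses $O(\log n)$-bit activations and $T=\poly(n)$ loops. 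If its precision requirement is larger, I would state the placement under that paper's assumptions and note that the padded length $n$ absorbs the scratch positions, as the proposition already stipulates.

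\textbf{Conditional consequence.} CVP is $\Pclass$-complete under logspace reductions, and $\DSPACE(\polylog n)$ is closed under such reductions. Hence if $\SSL \subseteq \DSPACE(\polylog n)$, then CVP, and therefore all of $\Pclass$, would lie in $\DSPACE(\polylog n)$, contradicting the assumption $\Pclass \not\subseteq \DSPACE(\polylog n)$. A polynomial padding blowup does not affect this, since $\polylog$ of a polynomial is still $\polylog n$.
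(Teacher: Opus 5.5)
Your proposal is correct and follows the paper's proof. You treat the $n$ positions as the slots of the \Cref{thm:upper} simulation, with the same old/new double-buffering, and combine \citet{saunshi2025reasoning}'s padded-CoT simulation with \citet{li2024chain} to place CVP in the sequence-state endpoint; you then conclude from the $\Pclass$-completeness of CVP and the closure of $\DSPACE(\polylog n)$ under logspace reductions. The one difference is that the paper also invokes \citet{merrill2024the}'s characterisation of polynomial-step CoT as exactly $\Pclass$, which is the cleaner way to resolve the uniformity concern you flag.
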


What separates the two regimes is the scale of the persistent state. Compressed
loops keep $s\,d\,p$ recurrent bits;
sequence-state loops keep $n\,d\,p$ recurrent bits. Increasing the number of
iterations affects time and adds only the loop counter, while the amount of
persistent mutable state is fixed by the recurrent representation.
\Cref{sec:crossover} uses this difference to state the separation from CoT in
the compressed-memory regime.

\subsection{Separation from chain-of-thought}
\label{sec:crossover}

The small-space simulation now gives the separation. A compressed loop whose
persistent state is polylogarithmic, run for at most quasi-polynomially many
steps, is a polylogarithmic-space machine. A chain-of-thought scratchpad grows in a
different way. Each generated token is written into the context and remains
available to later steps. Combining this space bound with standard expressivity
results for polynomial-length CoT \citep{li2024chain, merrill2024the} yields a
conditional separation on $\Pclass$-complete serial tasks. The formal proof is
in \Cref{app:crossover}.

\begin{theorem}[Compressed-loop/chain-of-thought separation]
\label{thm:crossover}
Assume $\Pclass \not\subseteq \DSPACE(\polylog n)$, and let $\mathcal{L}$ be
$\Pclass$-complete under logspace reductions. No logspace-uniform compressed-loop family with
\[
s\,d\,p = \polylog(n), \qquad \log T = \polylog(n)
\]
decides $\mathcal{L}$. By contrast, under standard chain-of-thought expressivity
results, $\mathcal{L}$ is decidable by a chain-of-thought family with
polynomially many generated tokens.
\end{theorem}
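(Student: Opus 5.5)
The plan has two halves. The first is the impossibility for compressed loops, obtained from \Cref{thm:upper}. The second is the positive CoT statement, obtained by citing known expressivity results.

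For the first half, suppose toward a contradiction that a logspace-uniform family $\CLL[s,d,p,T]$ decides $\mathcal{L}$, with $s\,d\,p \le (\log n)^{a}$ and $\log T \le (\log n)^{b}$ for constants $a,b$. \Cref{thm:upper} gives a deterministic simulator using work space $O(s\,d\,p + \log n + \log T) = O((\log n)^{\max(a,b,1)})$. Any additive $\poly(p)$ term from general uniform arithmetic is harmless, since $p = O(\log n)$. Hence $\mathcal{L} \in \DSPACE((\log n)^{c})$ for some fixed $c$.

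Next I would transfer this to all of $\Pclass$. Take any $\mathcal{L}' \in \Pclass$ and a logspace reduction $f$ with $x \in \mathcal{L}' \iff f(x) \in \mathcal{L}$. Note that $|f(x)| \le \poly(|x|)$, so $\log|f(x)| = O(\log |x|)$. Compose the reduction with the polylog-space decider in the standard way. Never write $f(x)$ down. Whenever the simulator queries the $j$-th input symbol, recompute $f(x)$ from scratch with the output suppressed except for position $j$. This uses $O(\log|x|)$ space for the reduction plus an $O(\log|x|)$ counter for $j$. The total space is $O((\log|x|)^{c})$, so $\mathcal{L}' \in \DSPACE(\polylog n)$. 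Therefore $\Pclass \subseteq \DSPACE(\polylog n)$, contradicting the assumption.

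The one subtlety is that the compressed loop is a \emph{family} indexed by $n$, with parameters logspace-uniform in $n$. Run on input $f(x)$, it must use the member for length $|f(x)|$. Its parameters are still generated in space $O(\log|f(x)|) = O(\log|x|)$, which fits the budget. I also need $\mathcal{L}$ to be decided on every input length, which is what ``decides'' means here.

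For the CoT half, I would invoke \citet{merrill2024the}: decoders with polynomially many CoT steps, in the log-precision setting, recognise every language in $\Pclass$. Alternatively, \citet{li2024chain} lets constant-depth Transformers with $\ell$ CoT steps simulate size-$\ell$ circuits. Since $\mathcal{L} \in \Pclass$ has polynomial-size uniform circuits, choosing $\ell = \poly(n)$ gives a CoT family with polynomially many generated tokens. Its memory is $M_{\COT} = \Theta(\ell)$, which is far beyond polylogarithmic.

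I expect the main obstacle to be model-matching rather than the complexity argument itself. The CoT expressivity theorems are stated under their own precision, uniformity and attention assumptions (hard versus soft attention, and the embedding conventions). The statement hedges this with ``under standard chain-of-thought expressivity results,'' so I would explicitly adopt those papers' model assumptions for the CoT side. The reduction-composition step is the only genuinely technical part, and it is routine.
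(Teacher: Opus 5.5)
Your proposal is correct and matches the paper's proof. Both apply the space bound of \Cref{thm:upper} (the paper routes it through \Cref{lem:budget}, which is the same inclusion) to place $\mathcal{L}$ in $\DSPACE(\polylog n)$, compose with a logspace reduction by recomputing each queried bit of $f(x)$ to conclude $\Pclass \subseteq \DSPACE(\polylog n)$, and cite \citet{li2024chain} and \citet{merrill2024the} for the chain-of-thought side; your remark about instantiating the uniform family at length $|f(x)|$ is a detail the paper leaves implicit, and you handle it correctly.
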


\emph{Proof intuition.} By \Cref{thm:upper}, a compressed loop with
$s\,d\,p = \polylog(n)$ and $\log T = \polylog(n)$ lies in $\DSPACE(\polylog n)$.
If such a loop decided a $\Pclass$-complete language, then every language in
$\Pclass$ would be decidable in polylogarithmic space, contradicting the
assumption. CoT is not subject to this compressed-state bound: its scratchpad
grows as tokens are generated, and polynomial-length scratchpads reach the
polynomial-time regime under the standard expressivity results
\citep{li2024chain, merrill2024the}. The full proof is in \Cref{app:crossover}.

The theorem addresses the compressed-state endpoint. The memory-rich endpoint
has a different budget, since sequence-state loops carry $n\,d\,p$ persistent bits.
This is the setting of the formal latent-thought comparison of
\citet{cotvsloop2025}, where looped or latent thought can outperform CoT on
parallel-evaluable computations. The two conclusions therefore apply to
different persistent-state budgets.

\section{Empirical studies}
\label{sec:empirics}

The space bound suggests a concrete diagnostic. In a compressed loop, extra
iterations let the model revisit the input and refine the same recurrent state,
but they add no new places to store independent pieces of information. If this
accounting carries over to trained finite models, then a task whose
working-memory load can be dialled up should show a state-budget transition, with
more persistent state helping as the load grows and added recurrence failing to
relieve an undersized-state bottleneck.

We study two controlled settings in which the task load and the persistent-state
budget vary independently. In pointer chasing the load is the number of
independent chains tracked to a single final decode, and the budget is the number
of compressed slots; in associative recall the load is the number of key-value
pairs, and the budget is the recurrent state dimension. Both are finite-size
probes of that prediction, with recurrence depth and training budget held fixed
so that only the state knob varies. All seed-level numbers behind the
two sweeps, with no run excluded from any reported aggregate, are tabulated in
\Cref{app:seeds}.

\subsection{A controlled slot sweep}
\label{sec:poc}

Pointer chasing varies concurrent working-memory demand directly, with the number
of compressed slots $s$ as the persistent-memory knob we sweep. The
task is parameterised by $k$, the number of independent
pointer chains whose states must be maintained at once. Each instance gives a
functional graph on $n=16$ cells, $k$ start cells, and asks for the cell each
chain reaches after a fixed number of hops. Under the single-final-decode
protocol used here, a direct concurrent algorithm stores all $k$ current chain
states at once, $\Theta(k\log n)$ bits; the experiment tests whether the
compressed slots behave like such concurrent working memory. We compare
compressed latent loops
$\CLL[s]$ with $s$ persistent slots against a sequence-state loop ($\SSL$,
$M=\Theta(n)$) control, sweeping $s \in \{1,2,4,8,16\}$ and
$k \in \{1,2,4,8\}$, $T=16$ iterations, ten seeds. Both models instantiate the
reasoners of \Cref{sec:models}. The compressed loop carries only its $s$ slots
across the iterations, attending at each step to a cell table that is recomputed
from the read-only input and never updated, so its persistent budget is
$M_{\CLL}=s\,d\,p$ in the sense of \Cref{def:cll}, and each chain is read out by a
query against the slots after the final iteration. The sequence-state control
instead updates a hidden vector at all $n$ cells, matching \Cref{def:ssl}.

The finite-size prediction is deliberately one-sided. When the persistent state falls below
the task's concurrent-state demand, recurrence alone cannot repair the
bottleneck, while sufficient nominal state does not by itself guarantee that
training finds a solution.
The slot count $s$ is a register-count bottleneck for the $k$ chains, so the
subdiagonal regime $s < k$ is the stress test. In this experiment $s$ should be
read as a register count, not a literal bit budget. Formally, $s < k$ is a bit-space lower bound only when
$d\,p = \Theta(\log n)$ and each slot holds $O(1)$ chain states, whereas here
$n = 16$ makes $\log n$ small relative to $d\,p$. The theory does not assert a
trainable threshold at $s = k$; its converse (\Cref{lem:budget}) is conditional
on sufficient iteration, routing, and trainability.

\Cref{fig:heatmap} shows the 10-seed sweep. In the subdiagonal regime $s<k$ the
collapse is robust, with mean cell accuracy $0.041$ and maximum cell-mean
$0.129$. The collapse is also low-variance
in the genuinely subcritical cells, with standard deviations between $0.00$ and
$0.02$ for $k\in\{2,4,8\}$ below the diagonal, so the failure below the slot
budget is not a seed accident. The sequence-state control is essentially
saturated, reaching $0.999$ mean accuracy for $k=2,4,8$; at $k=1$, nine of ten
$\SSL$ seeds reach $\approx 1.00$ while one fails to train, giving a mean of
$0.936$. We therefore treat the $\SSL$ result as a saturated control up to a
single optimisation outlier.

The 10-seed sweep also shows that the above-threshold regime is not smooth. Some
cells are reliably solved ($k{=}2$ with $s\ge 2$, $k{=}4$ with $s\in\{4,8\}$),
but others are seed-dependent: for $k{=}1,s{=}1$ the runs split into six high
seeds and four low seeds with little mass between them, and the over-provisioned
$s{=}16$ column is unstable, with $k{=}8,s{=}16$ ranging from near failure to
near-perfect accuracy. These high-variance cells look less like noisy estimates
of a single mean and more like mixtures of successful and failed training runs.
Our interpretation is that, once the hard memory bottleneck is removed, the
remaining failures are mostly about whether training finds a usable routing
scheme, not a contradiction of the lower-bound direction. It is the same
phenomenon as the router-initialisation trap of \citet{sapunov2026utm}, where
above the memory threshold success still depends on finding the right recurrent
routing solution.

\begin{figure}[H]
\centering
\includegraphics[width=0.6\textwidth]{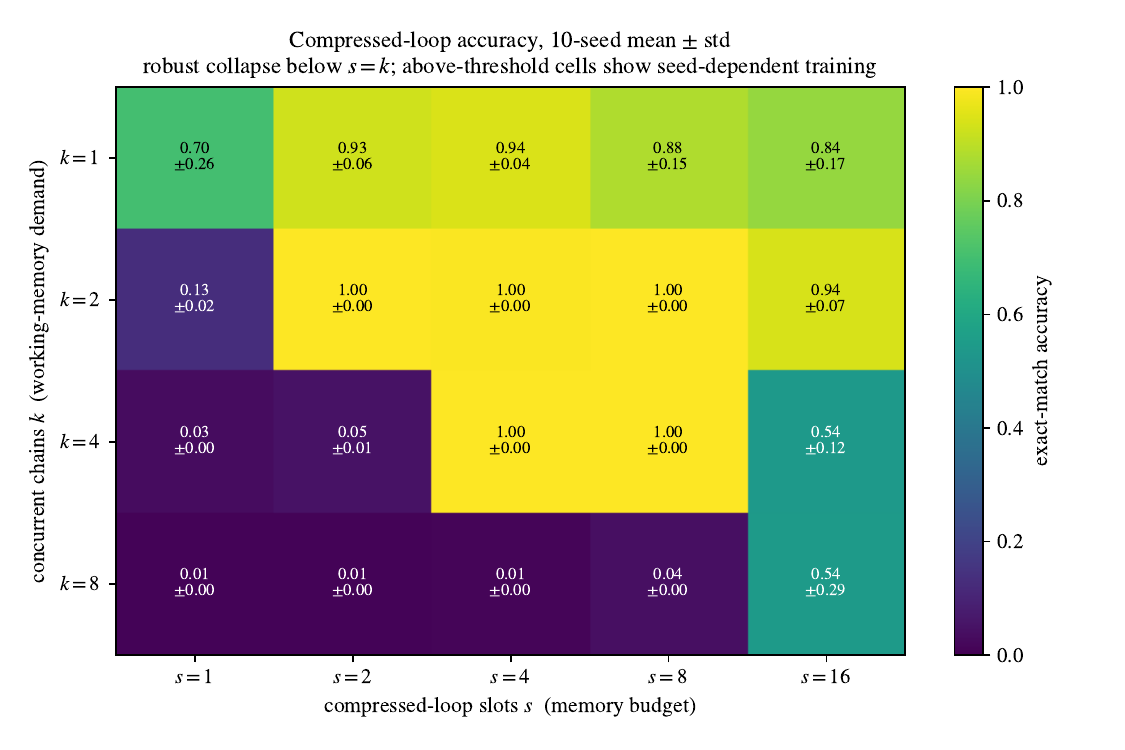}
\caption{Compressed-loop accuracy on $k$-concurrent pointer chasing, as a
function of persistent slots $s$ and concurrency $k$. Each cell shows exact-match
accuracy, mean $\pm$ standard deviation over $10$ seeds. The subdiagonal cells
$s<k$ collapse to low accuracy (mean $0.041$, maximum cell-mean $0.129$), while the
sequence-state loop control is essentially saturated: it reaches $\approx 1.00$
for $k=2,4,8$ and $0.94$ for $k=1$, where one of ten seeds failed to train. Above
and near the diagonal, several cells show seed-dependent success/failure
behaviour; these optimisation effects are not claimed as a tight achievability
threshold. The full ten-seed table for every compressed-loop cell and the
sequence-state control appears in
\Cref{tab:pointer-cll-seeds,tab:pointer-ssl-seeds}.}
\label{fig:heatmap}
\end{figure}

\subsection{Associative recall with finite recurrent state}
\label{sec:gla}

\begin{figure}[H]
\centering
\includegraphics[width=0.6\textwidth]{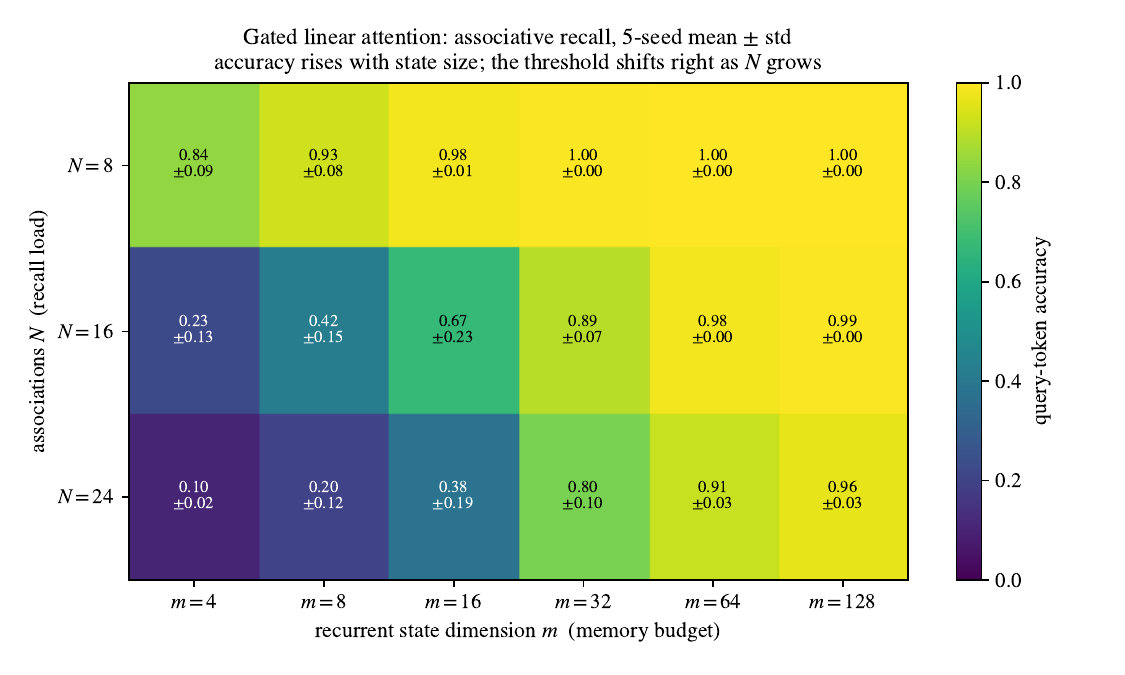}
\caption{Gated linear-attention query-token accuracy on associative recall with
$N$ key-value pairs, as a function of recurrent state dimension $m$, mean $\pm$
standard deviation over five seeds. The mean accuracy rises monotonically with
$m$ at each $N$, and the threshold shifts right as $N$ grows. Full softmax
attention is essentially saturated (mean $\ge 0.999$ in every cell); vanilla
linear attention remains low for $N{=}16,24$, even when taking the best $m$ in
the sweep. \Cref{app:seeds} reports the full five-seed table, the per-seed
$m_{\mathrm{crit}}$ values, and the control summaries.}
\label{fig:gla}
\end{figure}

As a second controlled check, and following the recall-memory literature on
efficient attention \citep{arora2024zoology, arora2024simple}, we test the same
memory-budget prediction in an associative-recall setting. Each example contains
$N$ key-value associations, and the model must retrieve the correct value from a
query. We vary the state dimension $m$, so the recurrent state size is
proportional to $m\,d$ for fixed width $d$, in a gated linear-attention model;
full softmax attention serves as a memory-rich control, and vanilla linear
attention as a compressed-state negative control. For each seed and association
load $N$, let $m_{\mathrm{crit}}$ be the smallest state dimension $m$ whose
accuracy reaches $0.9$.

The result is a state-dimension crossover (\Cref{fig:gla}). The five-seed mean
accuracy increases monotonically with $m$ for each association load $N$, and the
transition shifts to larger $m$ as $N$ increases. Full softmax attention is
essentially saturated in every setting. Vanilla linear attention, even taking
the best $m$ in the sweep, solves only the smallest association load and remains
low for larger loads. As in the pointer-chasing sweep, the saturated and failed
cells are stable across seeds, whereas transition cells show seed-dependent
success/failure behaviour. We therefore report $m_{\mathrm{crit}}$ as a range
across seeds rather than a sharp threshold: $N{=}8$ has median $8$ and range $4$
to $16$, $N{=}16$ has median $32$ and range $16$ to $64$, and $N{=}24$ has
median $64$ and range $32$ to $128$.

\section{Discussion and limitations}
\label{sec:discussion}

The separation suggests a simple design principle. Test-time recurrence should
be specified together with the state that persists across recurrences. A loop
whose state is compressed can spend more time refining the same memory, but it
does not acquire the growing scratchpad that CoT obtains by writing tokens. Full
sequence-state loops and padded latent-thought constructions sit in a different
regime, because their recurrent state scales with the sequence length or with
added scratch positions. Persistent memory is therefore a resource of test-time
compute alongside depth, width, and generated-token length.

For practical looped language models \citep{ouro2025, jeddi2026loopformer}, this
separates two notions of memory efficiency. Sharing a key-value (KV) cache across
loop depth, as in \citet{melt2026}, decouples memory from the number of recurrent
iterations, but
it does not by itself make the persistent state small relative to the input or
context length. The compressed-loop bound applies most directly to models whose
recurrent state is capped, pooled, low-rank, or otherwise sublinear in context
length. A concrete experimental target is to expose the persistent-state budget
as a controllable knob (the number of memory tokens, recurrent slots, state
dimension, or cache rank) and measure how the critical budget scales with task
load. The pointer-chasing and associative-recall sweeps in \Cref{sec:empirics}
are small examples of this type of measurement. Concurrent empirical work on
adaptive looping and gated memory banks \citep{frey2026adaptive} asks a related
architectural question, whether a looped Transformer should refine its hidden
state for longer or add explicit learned storage; our separation isolates a
complementary resource, the mutable activation state that persists across
recurrences.

The experiments also point to an optimisation question. Once the state budget is
large enough, success is not automatic, as threshold cells can be seed-dependent
and over-provisioned memory can make routing unstable. Future work should separate
capacity from trainability by measuring not only whether a state budget is
sufficient in principle, but how reliably a model learns to use that state.
Mechanistic analyses of routing, halting, memory-token specialisation, and
recurrent fixed points \citep{blayney2026mechanistic} would make this account
more predictive.

\paragraph{Limitations and next steps.}
The formal separation is asymptotic and conditional. It relies on the standard
assumption $\Pclass \not\subseteq \DSPACE(\polylog n)$, the log-precision,
logspace-uniform model, and a compressed-loop definition in which the persistent
recurrent state is capped independently of the input length. It does not
lower-bound full sequence-state loops, loops with appended scratch positions,
stochastic decoding, or architectures whose persistent state grows with the
context. The experiments are finite-size probes of the state-budget effect
rather than tests of the asymptotic theorem, and they do not prove task-level
lower bounds for Sudoku, pointer chasing, or associative recall. A natural next
step is to prove lower bounds for explicit task families and to scale the
controlled experiments until the bit-level memory regime itself becomes active;
the intermediate budgets between the $s\,d\,p$ compressed and $n\,d\,p$
sequence-state regimes, where many practical looped language models sit, are the
most relevant targets.

The point, then, is to separate the resources these interfaces provide, not to
rank looped Transformers against chain-of-thought. Recurrence,
generated tokens, and full sequence state are different ways of buying
test-time computation, but they buy different resources. Depth determines how
many updates a model can perform; persistent memory determines what those
updates can continue to remember. Any comparison of test-time reasoning
interfaces should therefore specify both the computation performed and the
state preserved across steps.

\clearpage
\bibliographystyle{ims}
\bibliography{refs}

\begin{thebibliography}{24}
\expandafter\ifx\csname natexlab\endcsname\relax\def\natexlab#1{#1}\fi
\expandafter\ifx\csname url\endcsname\relax
  \def\url#1{\texttt{#1}}\fi
\expandafter\ifx\csname urlprefix\endcsname\relax\def\urlprefix{URL }\fi

\bibitem[{Amiri et~al.(2025)Amiri, Huang, Rofin and Hahn}]{bavandpour2025lower}
\textsc{Amiri, A.}, \textsc{Huang, X.}, \textsc{Rofin, M.} and \textsc{Hahn,
  M.} (2025).
\newblock Lower bounds for chain-of-thought reasoning in hard-attention
  transformers.
\newblock In \textit{Forty-second International Conference on Machine
  Learning}.
\newline\urlprefix\url{https://openreview.net/forum?id=Oh9sG5ae2b}

\bibitem[{Arora et~al.(2024{\natexlab{a}})Arora, Eyuboglu, Timalsina, Johnson,
  Poli, Zou, Rudra and Re}]{arora2024zoology}
\textsc{Arora, S.}, \textsc{Eyuboglu, S.}, \textsc{Timalsina, A.},
  \textsc{Johnson, I.}, \textsc{Poli, M.}, \textsc{Zou, J.}, \textsc{Rudra, A.}
  and \textsc{Re, C.} (2024{\natexlab{a}}).
\newblock Zoology: Measuring and improving recall in efficient language models.
\newblock In \textit{The Twelfth International Conference on Learning
  Representations (ICLR)}.
\newline\urlprefix\url{https://openreview.net/forum?id=LY3ukUANko}

\bibitem[{Arora et~al.(2024{\natexlab{b}})Arora, Eyuboglu, Zhang, Timalsina,
  Alberti, Zou, Rudra and Re}]{arora2024simple}
\textsc{Arora, S.}, \textsc{Eyuboglu, S.}, \textsc{Zhang, M.},
  \textsc{Timalsina, A.}, \textsc{Alberti, S.}, \textsc{Zou, J.},
  \textsc{Rudra, A.} and \textsc{Re, C.} (2024{\natexlab{b}}).
\newblock Simple linear attention language models balance the recall-throughput
  tradeoff.
\newblock In \textit{Forty-first International Conference on Machine Learning
  (ICML)}.
\newline\urlprefix\url{https://openreview.net/forum?id=e93ffDcpH3}

\bibitem[{{Back de Luca} and Fountoulakis(2024)}]{luca2024simulation}
\textsc{{Back de Luca}, A.} and \textsc{Fountoulakis, K.} (2024).
\newblock Simulation of graph algorithms with looped transformers.
\newblock In \textit{Forty-first International Conference on Machine Learning
  (ICML)}.
\newline\urlprefix\url{https://openreview.net/forum?id=aA2326y3hf}

\bibitem[{Blayney et~al.(2026)Blayney, Arroyo, Obando-Ceron, Castro, Courville,
  Bronstein and Dong}]{blayney2026mechanistic}
\textsc{Blayney, H.}, \textsc{Arroyo, {\'A}.}, \textsc{Obando-Ceron, J.},
  \textsc{Castro, P.~S.}, \textsc{Courville, A.}, \textsc{Bronstein, M.~M.} and
  \textsc{Dong, X.} (2026).
\newblock A mechanistic analysis of looped reasoning language models.
\newblock \textit{arXiv preprint arXiv:2604.11791} .

\bibitem[{{Conchello Vendrell} et~al.(2026){Conchello Vendrell}, Masdemont,
  Grillo, Ros-Giralt, Behboodi and Massoli}]{melt2026}
\textsc{{Conchello Vendrell}, V.}, \textsc{Masdemont, A.~P.}, \textsc{Grillo,
  N.}, \textsc{Ros-Giralt, J.}, \textsc{Behboodi, A.} and \textsc{Massoli,
  F.~V.} (2026).
\newblock Memory-efficient looped transformer: Decoupling compute from memory
  in looped language models.
\newblock \textit{arXiv preprint arXiv:2605.07721} .

\bibitem[{Frey et~al.(2026)Frey, Shomali, Bashir, Berghaus, Koehler and
  Ali}]{frey2026adaptive}
\textsc{Frey, M.}, \textsc{Shomali, B.}, \textsc{Bashir, A.~H.},
  \textsc{Berghaus, D.}, \textsc{Koehler, J.} and \textsc{Ali, M.} (2026).
\newblock Adaptive loops and memory in transformers: Think harder or know more?
\newblock In \textit{Workshop on Latent {\&} Implicit Thinking {\textendash}
  Going Beyond {CoT} Reasoning}.
\newline\urlprefix\url{https://openreview.net/forum?id=F87X9c107e}

\bibitem[{Geiping et~al.(2025)Geiping, McLeish, Jain, Kirchenbauer, Singh,
  Bartoldson, Kailkhura, Bhatele and Goldstein}]{geiping2026scaling}
\textsc{Geiping, J.}, \textsc{McLeish, S.~M.}, \textsc{Jain, N.},
  \textsc{Kirchenbauer, J.}, \textsc{Singh, S.}, \textsc{Bartoldson, B.~R.},
  \textsc{Kailkhura, B.}, \textsc{Bhatele, A.} and \textsc{Goldstein, T.}
  (2025).
\newblock Scaling up test-time compute with latent reasoning: A recurrent depth
  approach.
\newblock In \textit{The Thirty-ninth Annual Conference on Neural Information
  Processing Systems}.
\newline\urlprefix\url{https://openreview.net/forum?id=S3GhJooWIC}

\bibitem[{Giannou et~al.(2023)Giannou, Rajput, Sohn, Lee, Lee and
  Papailiopoulos}]{giannou2023programmable}
\textsc{Giannou, A.}, \textsc{Rajput, S.}, \textsc{Sohn, J.-y.}, \textsc{Lee,
  K.}, \textsc{Lee, J.~D.} and \textsc{Papailiopoulos, D.} (2023).
\newblock Looped transformers as programmable computers.
\newblock In \textit{International Conference on Machine Learning (ICML)}.
\newline\urlprefix\url{https://proceedings.mlr.press/v202/giannou23a.html}

\bibitem[{Hao et~al.(2025)Hao, Sukhbaatar, Su, Li, Hu, Weston and
  Tian}]{hao2025training}
\textsc{Hao, S.}, \textsc{Sukhbaatar, S.}, \textsc{Su, D.}, \textsc{Li, X.},
  \textsc{Hu, Z.}, \textsc{Weston, J.~E.} and \textsc{Tian, Y.} (2025).
\newblock Training large language models to reason in a continuous latent
  space.
\newblock In \textit{Second Conference on Language Modeling (COLM)}.
\newline\urlprefix\url{https://openreview.net/forum?id=Itxz7S4Ip3}

\bibitem[{Jeddi et~al.(2026)Jeddi, Ciccone and Taati}]{jeddi2026loopformer}
\textsc{Jeddi, A.}, \textsc{Ciccone, M.} and \textsc{Taati, B.} (2026).
\newblock Loopformer: Elastic-depth looped transformers for latent reasoning
  via shortcut modulation.
\newblock In \textit{The Fourteenth International Conference on Learning
  Representations (ICLR)}.
\newline\urlprefix\url{https://openreview.net/forum?id=RzYXb5YWBs}

\bibitem[{Li et~al.(2024)Li, Liu, Zhou and Ma}]{li2024chain}
\textsc{Li, Z.}, \textsc{Liu, H.}, \textsc{Zhou, D.} and \textsc{Ma, T.}
  (2024).
\newblock Chain of thought empowers transformers to solve inherently serial
  problems.
\newblock In \textit{The Twelfth International Conference on Learning
  Representations (ICLR)}.
\newline\urlprefix\url{https://openreview.net/forum?id=3EWTEy9MTM}

\bibitem[{Lu et~al.(2025)Lu, Yang, Lee, Li and Liu}]{lu2025latent}
\textsc{Lu, W.}, \textsc{Yang, Y.}, \textsc{Lee, K.}, \textsc{Li, Y.} and
  \textsc{Liu, E.} (2025).
\newblock Latent chain-of-thought? decoding the depth-recurrent transformer.
\newblock \textit{arXiv preprint arXiv:2507.02199} .

\bibitem[{Merrill and Sabharwal(2023)}]{merrill2023logspace}
\textsc{Merrill, W.} and \textsc{Sabharwal, A.} (2023).
\newblock The parallelism tradeoff: Limitations of log-precision transformers.
\newblock \textit{Transactions of the Association for Computational Linguistics
  (TACL)} \textbf{11} 531--545.
\newblock ArXiv:2207.00729.

\bibitem[{Merrill and Sabharwal(2024)}]{merrill2024the}
\textsc{Merrill, W.} and \textsc{Sabharwal, A.} (2024).
\newblock The expressive power of transformers with chain of thought.
\newblock In \textit{The Twelfth International Conference on Learning
  Representations (ICLR)}.
\newline\urlprefix\url{https://openreview.net/forum?id=NjNGlPh8Wh}

\bibitem[{Merrill and Sabharwal(2025{\natexlab{a}})}]{merrill2026exact}
\textsc{Merrill, W.} and \textsc{Sabharwal, A.} (2025{\natexlab{a}}).
\newblock Exact expressive power of transformers with padding.
\newblock In \textit{The Thirty-ninth Annual Conference on Neural Information
  Processing Systems (NeurIPS)}.
\newline\urlprefix\url{https://openreview.net/forum?id=O1abxStFcy}

\bibitem[{Merrill and Sabharwal(2025{\natexlab{b}})}]{merrill2026a}
\textsc{Merrill, W.} and \textsc{Sabharwal, A.} (2025{\natexlab{b}}).
\newblock A little depth goes a long way: The expressive power of log-depth
  transformers.
\newblock In \textit{The Thirty-ninth Annual Conference on Neural Information
  Processing Systems (NeurIPS)}.
\newline\urlprefix\url{https://openreview.net/forum?id=5pHfYe10iX}

\bibitem[{Okpekpe and Orvieto(2025)}]{okpekpe2025revisiting}
\textsc{Okpekpe, D.} and \textsc{Orvieto, A.} (2025).
\newblock Revisiting associative recall in modern recurrent models.
\newblock \textit{arXiv preprint arXiv:2508.19029} .

\bibitem[{Sapunov(2026)}]{sapunov2026utm}
\textsc{Sapunov, G.} (2026).
\newblock Universal transformers need memory: Depth-state trade-offs in
  adaptive recursive reasoning.
\newblock \textit{arXiv preprint arXiv:2604.21999} .

\bibitem[{Saunshi et~al.(2025)Saunshi, Dikkala, Li, Kumar and
  Reddi}]{saunshi2025reasoning}
\textsc{Saunshi, N.}, \textsc{Dikkala, N.}, \textsc{Li, Z.}, \textsc{Kumar, S.}
  and \textsc{Reddi, S.~J.} (2025).
\newblock Reasoning with latent thoughts: On the power of looped transformers.
\newblock In \textit{The Thirteenth International Conference on Learning
  Representations (ICLR)}.
\newline\urlprefix\url{https://openreview.net/forum?id=din0lGfZFd}

\bibitem[{Xu and Sato(2026)}]{cotvsloop2025}
\textsc{Xu, K.} and \textsc{Sato, I.} (2026).
\newblock A formal comparison between chain of thought and latent thought.
\newblock \textit{arXiv preprint arXiv:2509.25239} .
\newline\urlprefix\url{https://arxiv.org/abs/2509.25239}

\bibitem[{Yang et~al.(2025)Yang, Srebro, McAllester and Li}]{yang2025pencil}
\textsc{Yang, C.}, \textsc{Srebro, N.}, \textsc{McAllester, D.} and \textsc{Li,
  Z.} (2025).
\newblock {PENCIL}: Long thoughts with short memory.
\newblock In \textit{Forty-second International Conference on Machine Learning
  (ICML)}.
\newline\urlprefix\url{https://openreview.net/forum?id=6wglsDXIei}

\bibitem[{Yang et~al.(2024)Yang, Lee, Nowak and
  Papailiopoulos}]{yang2024looped}
\textsc{Yang, L.}, \textsc{Lee, K.}, \textsc{Nowak, R.~D.} and
  \textsc{Papailiopoulos, D.} (2024).
\newblock Looped transformers are better at learning learning algorithms.
\newblock In \textit{The Twelfth International Conference on Learning
  Representations (ICLR)}.
\newline\urlprefix\url{https://openreview.net/forum?id=HHbRxoDTxE}

\bibitem[{Zhu et~al.(2025)Zhu, Wang, Hua, Zhang, Li, Que, Wei, Wen, Yin, Xing,
  Li, Shi, Ma, Li, Kergan, Smith, Qu, Hui, Wu, Min, Huang, Zhou, Ye, Liu, Yang,
  Shi, Lin, Zhao, Cai, Zhang, Huang, Bengio and Eshraghian}]{ouro2025}
\textsc{Zhu, R.-J.}, \textsc{Wang, Z.}, \textsc{Hua, K.}, \textsc{Zhang, T.},
  \textsc{Li, Z.}, \textsc{Que, H.}, \textsc{Wei, B.}, \textsc{Wen, Z.},
  \textsc{Yin, F.}, \textsc{Xing, H.}, \textsc{Li, L.}, \textsc{Shi, J.},
  \textsc{Ma, K.}, \textsc{Li, S.}, \textsc{Kergan, T.}, \textsc{Smith, A.},
  \textsc{Qu, X.}, \textsc{Hui, M.}, \textsc{Wu, B.}, \textsc{Min, Q.},
  \textsc{Huang, H.}, \textsc{Zhou, X.}, \textsc{Ye, W.}, \textsc{Liu, J.},
  \textsc{Yang, J.}, \textsc{Shi, Y.}, \textsc{Lin, C.}, \textsc{Zhao, E.},
  \textsc{Cai, T.}, \textsc{Zhang, G.}, \textsc{Huang, W.}, \textsc{Bengio, Y.}
  and \textsc{Eshraghian, J.} (2025).
\newblock Scaling latent reasoning via looped language models.
\newblock \textit{arXiv preprint arXiv:2510.25741} .

\end{thebibliography}

\clearpage
\appendix
\section{Computational conventions}
\label{app:model}

All results use the following standard conventions.

\begin{itemize}
\item The input $x \in \Sigma^n$ is on a read-only tape addressed by an
$O(\log n)$-bit register; only the read/write work tape counts toward space.

\item Activations are $p$-bit fixed-point numbers in
$\mathcal{F}_p = \{a\,2^{-p} : a \in \mathbb{Z},\ |a| \le 2^{2p}\}$, so each uses
$\Theta(p)$ bits, and we work in the log-precision regime $p = O(\log n)$
\citep{merrill2023logspace}.

\item Arithmetic, softmax exponentials, and reciprocals are deterministic $p$-bit
operations computed by specified logspace-uniform circuits, so the simulator
reproduces the finite-precision block exactly with no approximation gap. In full
generality the space bounds carry an additive $\poly(p)$ term, still
polylogarithmic under $p = O(\log n)$; the sharper $O(s\,d\,p + \log n)$ one-step
bound of \Cref{lem:onestep} assumes $O(p)$-space arithmetic.

\item A block $B_\theta$ is a constant-size composition of multi-head attention
and MLP sublayers with logspace-uniform weights (each weight bit computable from
its index in $O(\log n)$ space).
\end{itemize}

The three reasoners are those of \Cref{def:cll,def:ssl,def:cot}. For language
recognition, a family decides $L \subseteq \Sigma^*$ if a fixed threshold on the
decoder output equals $\mathbbm{1}_{\{x \in L\}}$ for every input $x$.

\section{Proof of \texorpdfstring{\Cref{lem:onestep}}{Lemma 4.1}}
\label{app:lemma}

\noindent\textbf{\Cref{lem:onestep} (One compressed-loop update in small space).}\;\emph{Under the log-precision,
logspace-uniform regime, one update $z \mapsto B_\theta(z \,;\, x)$ of a
$\CLL[s,d,p,T]$ is computable in deterministic work-space $O(s\,d\,p + \log n)$,
given $z$ stored on the work tape and $x$ on a read-only input tape.}

\begin{proof}
The update $z \mapsto B_\theta(z\,;\,x)$ is a constant-depth composition of
attention and MLP sublayers; it suffices to bound one attention sublayer, since
the constantly many sublayers reuse the same scratch and add only a constant
factor. Fix a query slot $j \in [s]$ and a head $h \in [H]$ with head dimension
$d_h = d/H$.

\emph{Query.} The query $q_{j,h} = W_{Q,h}\,\mathrm{norm}(z_j) \in \mathcal{F}_p^{d_h}$ is
a fixed linear map of the slot vector $z_j$ ($d p$ bits, on the work tape),
computable in $O(d p)$ space.

\emph{Streaming the softmax readout.} Let $\mathcal{K} = [n] \sqcup [s]$ index the
$n$ read-only input keys and the $s$ old-slot keys. For $u \in \mathcal{K}$ let
$k_u, v_u$ be the corresponding finite-precision key and value: for an input
position they are logspace-computable from $x_i$, the position, and the
parameters; for a slot position they are read from the stored old-slot buffer
(\Cref{def:cll}). The readout is
\[
\mathrm{out}_{j,h} = \sum_{u \in \mathcal{K}} \alpha_u\, v_u,
\qquad
\alpha_u = \frac{\exp(\ell_u - m)}{\sum_{u' \in \mathcal{K}}\exp(\ell_{u'}-m)},
\quad
\ell_u = \frac{\langle q_{j,h}, k_u\rangle}{\sqrt{d_h}},
\]
with $m = \max_{u \in \mathcal{K}} \ell_u$. Finite-precision addition is not
associative, so we fix a deterministic summation order, by increasing index over
$\mathcal{K}$, using the same rounded fixed-point addition as the block; the block's
attention sums are defined in this order, so the simulator reproduces them
exactly. We compute the readout in three read-only passes over $\mathcal{K}$,
storing no per-key quantity:
\begin{itemize}
\item \emph{Pass 1} computes $m = \max_{u \in \mathcal{K}} \ell_u$: for each $u$ read
its source (an input token $x_i$ via the $O(\log n)$-bit address register, or a
stored slot), form $k_u$ in $O(dp)$ space, compute $\ell_u$ from $q_{j,h}$ and
$k_u$, and update the running maximum.
\item \emph{Pass 2} computes $Z = \sum_{u \in \mathcal{K}} \exp(\ell_u - m)$ in the
fixed order, recomputing each $\ell_u$, in one register holding a sum of up to
$n+s$ terms in $[0,1]$; this fits in $O(p + \log(n+s))$ bits, absorbed by
$O(sdp + \log n)$.
\item \emph{Pass 3} accumulates $\mathrm{out}_{j,h}$ into a $d_h$-dimensional
$\mathcal{F}_p$ register ($O(d_h p)$ bits): for each $u$ recompute $\ell_u$ (not
stored), form $\alpha_u = \exp(\ell_u - m)/Z$, and add $\alpha_u v_u$ in the same
order.
\end{itemize}
Each pass uses $O(d p + \log n)$ work-space and reuses it across keys; the number
of passes affects time but not space. Finishing all $H$ heads of slot $j$ and
concatenating gives $\mathrm{out}_j \in \mathcal{F}_p^d$ in $O(dp + \log n)$ reused
scratch.

\emph{Residual, MLP, write-back.} The residual add and the width-$d$ MLP are a
fixed circuit on the $O(dp)$-bit vector, computable in $O(dp)$ space; the result
is written to slot $j$ of a \emph{new} buffer.

\emph{Synchronous update and aggregate.} Because a sublayer may include
slot-to-slot attention, each output slot must be computed from the \emph{old} slot
values. We therefore keep two $s\,d\,p$-bit buffers, $z^{\mathrm{old}}$ and
$z^{\mathrm{new}}$: every output slot is written into $z^{\mathrm{new}}$ while
all slot reads draw from $z^{\mathrm{old}}$, and the buffers are swapped after
the sublayer (matching the $z^{\mathrm{cur}}/z^{\mathrm{new}}$ regions of
\Cref{app:upper}). This costs $2sdp = O(sdp)$ persistent space; the per-slot
scratch ($O(dp+\log n)$) is reused across the $s$ slots, so the total work-space
is $O(s\,d\,p + \log n)$.

\emph{Precision.} Every quantity is computed to precision $p$, matching the
block's definition, so the simulation outputs exactly $B_\theta(z\,;\,x)$; there
is no accumulated approximation error to control.
\end{proof}

\section{Proof of \texorpdfstring{\Cref{thm:upper}}{Theorem 4.2}}
\label{app:upper}

\noindent\textbf{\Cref{thm:upper} (Compressed loops are space-bounded).}\;\emph{For any $\CLL[s,d,p,T]$ in
the log-precision logspace-uniform regime,
$\CLL[s,d,p,T] \subseteq \DSPACE(O(s\,d\,p + \log n + \log T))$. In particular,
if $s\,d\,p = \polylog(n)$ and $\log T = \polylog(n)$, then
$\CLL \subseteq \DSPACE(\polylog n)$.}

\begin{proof}
The simulator keeps two $s\,d\,p$-bit regions $z^{\mathrm{cur}},
z^{\mathrm{new}}$ and a counter $t$ of $\lceil\log T\rceil$ bits. It writes the
seed $z_0 = \mathrm{init}_\theta(x)$ into $z^{\mathrm{cur}}$ (the initialiser,
computable in $O(sdp + \log n)$ space). For $t = 0,\dots,T-1$ it computes $z^{\mathrm{new}}
= B_\theta(z^{\mathrm{cur}}\,;\,x)$ by \Cref{lem:onestep} in $O(sdp+\log n)$
space, copies $z^{\mathrm{new}}$ over $z^{\mathrm{cur}}$, and increments $t$.
After $T$ steps it applies the decoder to $z^{\mathrm{cur}}$ ($O(sdp)$ space) and
thresholds. The input tape is re-scanned each iteration and is free. Work-space
is reused across iterations, so the total is
\[
O(s\,d\,p) + O(s\,d\,p + \log n) + O(\log T) = O(s\,d\,p + \log n + \log T).
\]
Thus the decided language lies in $\DSPACE(O(sdp + \log n + \log T))$. For
$s\,d\,p = \polylog(n)$ and $\log T = \polylog(n)$, in particular for
$T = \poly(n)$, this is $\DSPACE(\polylog n)$. With general $p$-bit arithmetic the
bound gains an additive $\poly(p)$ term, still polylogarithmic under
$p = O(\log n)$ (\Cref{app:model}).
\end{proof}

\section{Proof of \texorpdfstring{\Cref{prop:escape}}{Proposition 4.3}}
\label{app:escape}

\noindent\textbf{\Cref{prop:escape} (Sequence-state loops are memory-rich).}\;\emph{For $\SSL[d,p,T]$ the
persistent memory is $M_{\SSL} = n\,d\,p$, and
$\SSL[d,p,T] \subseteq \DSPACE(O(n\,d\,p + \log n + \log T))$, linear in the
sequence length for fixed $d,p$. Moreover, under
$\Pclass \not\subseteq \DSPACE(\polylog n)$, the class of sequence-state loop
families with polynomially many positions is not contained in
$\DSPACE(\polylog n)$.}

\begin{proof}
For $\SSL[d,p,T]$ the persistent state is the full hidden sequence $H_t \in
\mathcal{F}_p^{d\times n}$, of $n\,d\,p$ bits. Treating the $n$ positions as the slots of
\Cref{thm:upper} gives $\SSL[d,p,T]\subseteq\DSPACE(O(ndp+\log n+\log T))$,
$\Theta(n)$ for fixed $d,p$. This $O(ndp)$ bound is memory-rich rather than
tight: we do not claim every $\SSL$ needs linear space, but establish a
class-level lower bound by importing two constructions.
\citet{giannou2023programmable} realise an $\SSL$ that executes an arbitrary
instruction stream encoded in the input, and \citet{saunshi2025reasoning} show
an $\SSL$ emulates a length-$m$ CoT by appending $m$ dummy positions, so that
$\SSL$ with $m=\poly(n)$ positions and $T = \poly(n)$ iterations decides every
language reached by polynomial-length CoT. By \citet{li2024chain},
polynomial-length CoT simulates polynomial-size Boolean circuits, and by
\citet{merrill2024the}, under their transformer-decoder assumptions
polynomial-step CoT recognises exactly $\Pclass$; hence the languages reachable
by polynomial-length CoT include all of $\Pclass$, in particular the
$\Pclass$-complete CVP. If
$\SSL \subseteq \DSPACE(\polylog n)$ then $\mathrm{CVP}\in\DSPACE(\polylog n)$,
and since CVP is $\Pclass$-complete under logspace reductions this would give
$\Pclass\subseteq\DSPACE(\polylog n)$, contradicting the hypothesis. Hence
$\SSL\not\subseteq\DSPACE(\polylog n)$.
\end{proof}

\section{Proof of \texorpdfstring{\Cref{thm:crossover}}{Theorem 4.4}}
\label{app:crossover}

We first bound a compressed loop's space by its memory budget, which clarifies
why iteration count alone does not suffice.

\begin{lemma}[Memory budget upper-bounds reachable space]
\label{lem:budget}
Write $\mathrm{LOOP}[M]$ for a compressed loop with $M = s\,d\,p$ bits of
persistent state, log-precision, and $T$ iterations. Then
$\mathrm{LOOP}[M] \subseteq \DSPACE(O(M + \log n + \log T))$; for $T = \poly(n)$
the counter costs $\log T = O(\log n)$ and the bound becomes
$\DSPACE(O(M + \log n))$.
\end{lemma}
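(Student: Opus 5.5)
The bound in \Cref{lem:budget} is a restatement of \Cref{thm:upper} with the persistent budget $M = s\,d\,p$ substituted for $s\,d\,p$. I would therefore obtain it by direct reduction rather than by a new simulation. Given a compressed loop with budget $M$, precision $p$ and $T$ iterations, I reuse the simulator from the proof of \Cref{thm:upper}. It keeps two state regions $z^{\mathrm{cur}}, z^{\mathrm{new}}$ of $M$ bits each, plus a $\lceil \log T\rceil$-bit iteration counter. It writes $z_0 = \mathrm{init}_\theta(x)$, which costs $O(M + \log n)$ space by \Cref{def:cll}. It then performs $T$ updates, each using \Cref{lem:onestep} with work-space $O(M + \log n)$ that is reused across iterations. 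Finally it decodes $z_T$ and thresholds.

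Adding the three contributions (state buffers, per-step scratch, counter) gives $O(M) + O(M + \log n) + O(\log T) = O(M + \log n + \log T)$. This places every language decided by $\mathrm{LOOP}[M]$ in the stated $\DSPACE$ class.

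For the polynomial-iteration specialisation, take $T \le n^c$. Then $\log T \le c \log n = O(\log n)$, so the counter term is absorbed and the bound becomes $\DSPACE(O(M + \log n))$.

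The only point needing care is one the lemma inherits rather than creates. The bound should depend on $M$ only through the product $s\,d\,p$, and not on $s$, $d$ or $p$ separately. In the proof of \Cref{lem:onestep}, the per-slot scratch is $O(dp + \log n)$, the softmax denominator register is $O(p + \log(n+s))$, and the per-head accumulators are $O(d_h p)$. Each of these is at most $O(M + \log n)$, because $dp \le sdp$ and $\log s \le sdp$.

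I would also record the same caveat as in \Cref{app:model}. The bound assumes $O(p)$-space arithmetic; with general logspace-uniform arithmetic it picks up an additive $\poly(p)$ term. That term is harmless in the log-precision regime, since $\poly(p) = \polylog(n)$.

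I do not expect a genuine obstacle here. The most delicate step is checking that reading the logspace-uniform weights does not introduce space beyond $O(\log n)$. By \Cref{app:model}, each weight bit is computable from its index in $O(\log n)$ space, and that scratch is reused across weights. So the total stays $O(M + \log n + \log T)$.
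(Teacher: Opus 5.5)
Your proposal is correct and takes the same route as the paper, whose proof simply notes that the lemma is Theorem~\ref{thm:upper} with $M = s\,d\,p$ substituted, with the $\lceil \log T \rceil$-bit counter staying within the same budget. Your extra checks are sound elaborations of that one-line reduction: that each scratch term in the one-step simulation is $O(M + \log n)$, the $\poly(p)$ arithmetic caveat, and $\log T = O(\log n)$ for $T \le n^c$.
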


\begin{proof}
This is \Cref{thm:upper} with $M = sdp$, the iteration counter staying within
the same budget.
\end{proof}

\begin{remark}[Converse intuition]
Conversely, a deterministic computation using space $w(n)$ and halting within
$\tau$ steps can be represented by a $\mathrm{LOOP}[M]$ with $M = O(w(n))$ and
$T = \tau$, provided the block realises the machine's transition map. We use this
only as intuition for why $M$, rather than $T$ alone, is the governing resource;
the separation below uses only the forward inclusion.
\end{remark}

\begin{sloppypar}
\noindent\textbf{\Cref{thm:crossover} (Compressed-loop/chain-of-thought separation).} \emph{Assume
$\Pclass \not\subseteq \DSPACE(\polylog n)$, and let $\mathcal{L}$ be
$\Pclass$-complete under logspace reductions. No logspace-uniform compressed-loop family with
$s\,d\,p = \polylog(n)$ and $\log T = \polylog(n)$ decides $\mathcal{L}$; by
contrast, under standard expressivity results $\mathcal{L}$ is decidable by a
chain-of-thought family with polynomially many generated tokens.}
\end{sloppypar}

\begin{proof}
For $s\,d\,p=\polylog n$ and $\log T=\polylog n$,
$\mathrm{LOOP}[M]\subseteq\DSPACE(\polylog n)$ by \Cref{lem:budget}, the
iteration counter staying within the same budget. Let $\mathcal{L}$ be
$\Pclass$-complete under logspace reductions. If such a compressed loop decided
$\mathcal{L}$, then $\mathcal{L}\in\DSPACE(\polylog n)$. Every language in $\Pclass$ reduces to
$\mathcal{L}$ by a logspace-computable reduction $f$ of polynomial output length,
and the composition stays in polylogarithmic space, since each queried bit of
$f(x)$ is recomputed in $O(\log n)$ space and $\polylog(|f(x)|) = \polylog(n)$ as
$|f(x)|$ is polynomial in $n$. Hence $\Pclass\subseteq\DSPACE(\polylog n)$,
contradicting the assumption. Under standard chain-of-thought expressivity
results, polynomial-length CoT decides $\mathcal{L}$: \citet{li2024chain} give
the simulation of size-$T$ Boolean circuits by $T$-step CoT, and
\citet{merrill2024the} characterise the polynomial-step regime as exactly
$\Pclass$ under their model assumptions. Since $\mathcal{L}\in\Pclass$, this
gives the separation.
\end{proof}

\begin{remark}[Endpoints and the governing quantity]
More generally, by the forward inclusion of \Cref{lem:budget} a language with
deterministic space lower bound $w(n)$ cannot be decided by $\mathrm{LOOP}[M]$
whenever $M + \log n + \log T = o(w(n))$; for $\poly(n)$ iterations and
$w(n) \gg \log n$ this requires $M = \Omega(w(n))$, while CoT supplies
$\Theta(\ell)$ growing scratch and reaches such tasks at $\ell=\poly(n)$. The relative
power is therefore governed by $M$ versus $w(n)$. At the memory-rich endpoint
$M = \Theta(ndp)$ ($\Theta(n)$ for fixed $d,p$), $\mathrm{LOOP}[M]$ is the
sequence-state loop, the latent-thought regime discussed in \Cref{sec:related}; our separation concerns the opposite,
memory-constrained endpoint.
\end{remark}

\section{Experimental details and seed-level results}
\label{app:seeds}

This appendix reports the seed-level results underlying
\Cref{fig:heatmap,fig:gla}. The tables are included only for reproducibility:
every mean and standard deviation in \Cref{sec:empirics} is computed over the
listed seeds, with no failed, low, or outlier run excluded. Labels such as
``failed seed'', ``high seed'', and ``low seed'' are post-hoc descriptive terms
for interpreting optimisation regimes, not filtering criteria.

\paragraph{Experimental setup.}
Both sweeps train with Adam at learning rate $3\times 10^{-4}$, batch size $64$,
for $12{,}000$ steps, and evaluate exact accuracy on $5120$ held-out examples
($20$ batches of $256$). Pointer chasing uses functional graphs on $n=16$ cells
with $4$-hop chains, hidden width $d=64$, $4$ attention heads, and $T=16$ loop
iterations; the compressed loop sweeps $s \in \{1,2,4,8,16\}$ slots against
$k \in \{1,2,4,8\}$ chains, and an instance is correct only if all $k$ endpoints
match. Associative recall uses inline multi-query associative recall with
vocabulary $K=64$, four queries per example, width $d=128$ and four blocks, and a
gated linear-attention block with a short causal depthwise convolution, sweeping
state dimension $m \in \{4,8,16,32,64,128\}$ against $N \in \{8,16,24\}$
associations and scoring per-query-token accuracy. Seeds $0$--$9$ (pointer
chasing) and $0$--$4$ (associative recall) are independent random initialisations
and data orders.

\subsection{Pointer chasing}
\label{app:seeds-poc}

\Cref{tab:pointer-cll-seeds} gives every compressed-loop cell, and
\Cref{tab:pointer-ssl-seeds} the sequence-state control, each over the ten seeds.

\begin{table}[H]
\centering\footnotesize\setlength{\tabcolsep}{3.2pt}
\begin{tabular}{ccrrrrrrrrrrcc}
\toprule
$k$ & $s$ & \multicolumn{10}{c}{seed} & mean & std \\
\cmidrule(lr){3-12}
 & & 0 & 1 & 2 & 3 & 4 & 5 & 6 & 7 & 8 & 9 & & \\
\midrule
$1$ & $1$ & 0.86 & 0.37 & 0.43 & 0.94 & 0.92 & 0.34 & 0.93 & 0.93 & 0.40 & 0.90 & 0.70 & 0.26 \\
$1$ & $2$ & 0.99 & 0.92 & 0.95 & 0.96 & 0.95 & 0.90 & 0.92 & 0.77 & 0.96 & 0.94 & 0.93 & 0.06 \\
$1$ & $4$ & 0.94 & 0.94 & 0.98 & 0.95 & 0.98 & 0.99 & 0.91 & 0.86 & 0.97 & 0.93 & 0.94 & 0.04 \\
$1$ & $8$ & 0.97 & 0.99 & 0.92 & 0.94 & 0.90 & 0.90 & 0.94 & 0.94 & 0.87 & 0.43 & 0.88 & 0.15 \\
$1$ & $16$ & 0.92 & 0.89 & 0.36 & 0.87 & 0.99 & 0.86 & 0.95 & 0.85 & 0.86 & 0.86 & 0.84 & 0.17 \\
\addlinespace[1pt]
$2$ & $1$\;{\scriptsize$(s<k)$} & 0.11 & 0.12 & 0.16 & 0.13 & 0.11 & 0.12 & 0.13 & 0.12 & 0.15 & 0.14 & 0.13 & 0.02 \\
$2$ & $2$ & 1.00 & 1.00 & 1.00 & 1.00 & 1.00 & 1.00 & 1.00 & 0.99 & 1.00 & 1.00 & 1.00 & 0.00 \\
$2$ & $4$ & 1.00 & 1.00 & 1.00 & 1.00 & 1.00 & 1.00 & 0.99 & 1.00 & 1.00 & 0.98 & 1.00 & 0.00 \\
$2$ & $8$ & 1.00 & 1.00 & 0.99 & 1.00 & 1.00 & 1.00 & 1.00 & 1.00 & 1.00 & 1.00 & 1.00 & 0.00 \\
$2$ & $16$ & 1.00 & 0.92 & 1.00 & 1.00 & 1.00 & 0.84 & 0.78 & 0.92 & 1.00 & 0.92 & 0.94 & 0.07 \\
\addlinespace[1pt]
$4$ & $1$\;{\scriptsize$(s<k)$} & 0.03 & 0.04 & 0.04 & 0.03 & 0.03 & 0.03 & 0.03 & 0.03 & 0.03 & 0.03 & 0.03 & 0.00 \\
$4$ & $2$\;{\scriptsize$(s<k)$} & 0.06 & 0.05 & 0.06 & 0.05 & 0.05 & 0.03 & 0.05 & 0.05 & 0.05 & 0.05 & 0.05 & 0.01 \\
$4$ & $4$ & 1.00 & 1.00 & 1.00 & 1.00 & 1.00 & 1.00 & 1.00 & 1.00 & 0.99 & 0.99 & 1.00 & 0.00 \\
$4$ & $8$ & 1.00 & 1.00 & 1.00 & 1.00 & 1.00 & 1.00 & 1.00 & 1.00 & 1.00 & 0.99 & 1.00 & 0.00 \\
$4$ & $16$ & 0.45 & 0.53 & 0.89 & 0.50 & 0.50 & 0.51 & 0.52 & 0.52 & 0.52 & 0.44 & 0.54 & 0.12 \\
\addlinespace[1pt]
$8$ & $1$\;{\scriptsize$(s<k)$} & 0.01 & 0.01 & 0.01 & 0.01 & 0.01 & 0.01 & 0.01 & 0.01 & 0.01 & 0.01 & 0.01 & 0.00 \\
$8$ & $2$\;{\scriptsize$(s<k)$} & 0.01 & 0.01 & 0.01 & 0.01 & 0.01 & 0.01 & 0.01 & 0.01 & 0.01 & 0.01 & 0.01 & 0.00 \\
$8$ & $4$\;{\scriptsize$(s<k)$} & 0.01 & 0.02 & 0.02 & 0.02 & 0.01 & 0.01 & 0.02 & 0.02 & 0.01 & 0.02 & 0.01 & 0.00 \\
$8$ & $8$ & 0.04 & 0.05 & 0.05 & 0.04 & 0.04 & 0.05 & 0.04 & 0.04 & 0.04 & 0.05 & 0.04 & 0.00 \\
$8$ & $16$ & 0.73 & 0.81 & 0.36 & 0.99 & 0.74 & 0.14 & 0.31 & 0.11 & 0.70 & 0.54 & 0.54 & 0.29 \\
\bottomrule
\end{tabular}
\caption{Seed-level exact-match accuracy for the compressed-loop pointer-chasing sweep (Figure~\ref{fig:heatmap}). Columns $0$--$9$ are the ten seeds; the mean and standard deviation are over all ten, with no seed excluded.}
\label{tab:pointer-cll-seeds}
\end{table}

\begin{table}[H]
\centering\footnotesize\setlength{\tabcolsep}{3.2pt}
\begin{tabular}{crrrrrrrrrrcc}
\toprule
$k$ & \multicolumn{10}{c}{seed} & mean & std \\
\cmidrule(lr){2-11}
 & 0 & 1 & 2 & 3 & 4 & 5 & 6 & 7 & 8 & 9 & & \\
\midrule
$1$ & 1.00 & 1.00 & 1.00 & 0.99 & 1.00 & 0.38 & 1.00 & 1.00 & 1.00 & 1.00 & 0.94 & 0.19 \\
$2$ & 1.00 & 1.00 & 1.00 & 1.00 & 1.00 & 1.00 & 1.00 & 1.00 & 1.00 & 1.00 & 1.00 & 0.00 \\
$4$ & 1.00 & 0.99 & 1.00 & 1.00 & 1.00 & 1.00 & 1.00 & 1.00 & 1.00 & 1.00 & 1.00 & 0.00 \\
$8$ & 1.00 & 1.00 & 1.00 & 1.00 & 1.00 & 1.00 & 1.00 & 1.00 & 1.00 & 1.00 & 1.00 & 0.00 \\
\bottomrule
\end{tabular}
\caption{Seed-level exact-match accuracy for the sequence-state loop control (Figure~\ref{fig:heatmap}). The single $k=1$ seed that failed to train is included in the mean and standard deviation.}
\label{tab:pointer-ssl-seeds}
\end{table}

\subsection{Associative recall}
\label{app:seeds-gla}

\Cref{tab:gla-seeds} gives the gated-linear-attention cells, \Cref{tab:mcrit-seeds}
the per-seed critical state dimension $m_{\mathrm{crit}}$, and
\Cref{tab:gla-controls} the softmax and vanilla-linear controls, each over the
five seeds.

\begin{table}[H]
\centering\small
\begin{tabular}{ccrrrrrcc}
\toprule
$N$ & $m$ & \multicolumn{5}{c}{seed} & mean & std \\
\cmidrule(lr){3-7}
 & & 0 & 1 & 2 & 3 & 4 & & \\
\midrule
$8$ & $4$ & 0.81 & 0.92 & 0.69 & 0.86 & 0.93 & 0.84 & 0.09 \\
$8$ & $8$ & 0.98 & 0.98 & 0.77 & 0.95 & 0.95 & 0.93 & 0.08 \\
$8$ & $16$ & 0.98 & 0.96 & 0.99 & 0.99 & 0.99 & 0.98 & 0.01 \\
$8$ & $32$ & 0.99 & 1.00 & 0.99 & 1.00 & 1.00 & 1.00 & 0.00 \\
$8$ & $64$ & 1.00 & 1.00 & 1.00 & 1.00 & 1.00 & 1.00 & 0.00 \\
$8$ & $128$ & 1.00 & 1.00 & 1.00 & 1.00 & 1.00 & 1.00 & 0.00 \\
\addlinespace[1pt]
$16$ & $4$ & 0.47 & 0.20 & 0.16 & 0.12 & 0.17 & 0.23 & 0.13 \\
$16$ & $8$ & 0.43 & 0.62 & 0.51 & 0.17 & 0.36 & 0.42 & 0.15 \\
$16$ & $16$ & 0.69 & 0.92 & 0.35 & 0.47 & 0.92 & 0.67 & 0.23 \\
$16$ & $32$ & 0.96 & 0.78 & 0.93 & 0.84 & 0.94 & 0.89 & 0.07 \\
$16$ & $64$ & 0.99 & 0.98 & 0.98 & 0.98 & 0.98 & 0.98 & 0.00 \\
$16$ & $128$ & 1.00 & 0.99 & 0.99 & 0.99 & 0.99 & 0.99 & 0.00 \\
\addlinespace[1pt]
$24$ & $4$ & 0.13 & 0.12 & 0.09 & 0.09 & 0.09 & 0.10 & 0.02 \\
$24$ & $8$ & 0.42 & 0.23 & 0.16 & 0.11 & 0.10 & 0.20 & 0.12 \\
$24$ & $16$ & 0.43 & 0.59 & 0.16 & 0.58 & 0.16 & 0.38 & 0.19 \\
$24$ & $32$ & 0.84 & 0.66 & 0.95 & 0.72 & 0.83 & 0.80 & 0.10 \\
$24$ & $64$ & 0.88 & 0.91 & 0.96 & 0.94 & 0.89 & 0.91 & 0.03 \\
$24$ & $128$ & 0.98 & 0.94 & 0.98 & 0.93 & 0.99 & 0.96 & 0.03 \\
\bottomrule
\end{tabular}
\caption{Seed-level query-token accuracy for gated linear attention on associative recall (Figure~\ref{fig:gla}). Columns $0$--$4$ are the five seeds; the mean and standard deviation are over all five.}
\label{tab:gla-seeds}
\end{table}

\begin{table}[H]
\centering\small
\begin{tabular}{crrrrrcc}
\toprule
$N$ & \multicolumn{5}{c}{seed} & median & range \\
\cmidrule(lr){2-6}
 & 0 & 1 & 2 & 3 & 4 & & \\
\midrule
$8$ & 8 & 4 & 16 & 8 & 4 & 8 & [4,16] \\
$16$ & 32 & 16 & 32 & 64 & 16 & 32 & [16,64] \\
$24$ & 128 & 64 & 32 & 64 & 128 & 64 & [32,128] \\
\bottomrule
\end{tabular}
\caption{Per-seed critical state dimension $m_{\mathrm{crit}}$, the smallest $m$ whose query-token accuracy is at least $0.9$ (Figure~\ref{fig:gla}).}
\label{tab:mcrit-seeds}
\end{table}

\begin{table}[H]
\centering\small
\begin{tabular}{llc}
\toprule
model & setting & accuracy \\
\midrule
full softmax attention & $N=8$ & $1.00 \pm 0.00$ \\
full softmax attention & $N=16$ & $1.00 \pm 0.00$ \\
full softmax attention & $N=24$ & $1.00 \pm 0.00$ \\
\addlinespace[1pt]
vanilla linear attention & $N=8$, best $m=128$ & $0.98 \pm 0.02$ \\
vanilla linear attention & $N=16$, best $m=8$ & $0.11 \pm 0.00$ \\
vanilla linear attention & $N=24$, best $m=16$ & $0.09 \pm 0.00$ \\
\bottomrule
\end{tabular}
\caption{Control summaries for associative recall, mean $\pm$ std over the five seeds. Softmax attention is the memory-rich control (independent of $m$); vanilla linear attention is the compressed-state negative control, reported at the best $m$ in the sweep for each $N$.}
\label{tab:gla-controls}
\end{table}

\end{document}